\documentclass[twoside,11pt]{article}
\usepackage{pgm}


\usepackage[utf8]{inputenc}
\inputencoding{utf8}

\usepackage{algorithm}
\usepackage[noend]{algpseudocode}
\algrenewcommand\Return{\State \algorithmicreturn{} }
\algnewcommand{\LineComment}[1]{\State \(\triangleright\) #1}
\usepackage[fleqn]{amsmath}
\usepackage{subcaption}



\usepackage{hyperref,color,soul,booktabs}
\setulcolor{blue}
\newcommand{\BibTeX}{\textsc{B\kern-0.1emi\kern-0.017emb}\kern-0.15em\TeX}



\begin{document}

\title{Sum-Product Network Decompilation}

\author{\Name{Cory J. Butz} \Email{butz@cs.uregina.ca}\\
   \addr University of Regina, Regina, Canada\and
   \Name{Jhonatan S. Oliveira} \Email{oliveira@cs.uregina.ca}\\
   \addr University of Regina, Regina, Canada\and
   \Name{Robert Peharz}  \Email{rp587@cam.ac.uk}\\
   \addr Eindhoven University of Technology, Eindhoven, Netherlands\\
   \addr University of Cambridge, Cambridge, United Kingdom}

\maketitle

\begin{abstract}
    There exists a dichotomy between classical probabilistic graphical models, such as \emph{Bayesian networks} (BNs), and modern tractable models, such as \emph{sum-product networks} (SPNs).
    The former generally have intractable inference, but provide a high level of interpretability, while the latter admit a wide range of tractable inference routines, but are typically harder to interpret.
    Due to this dichotomy, tools to convert between BNs and SPNs are desirable.
    While one direction -- compiling BNs into SPNs -- is well discussed in Darwiche's seminal work on \emph{arithmetic circuit} compilation, the converse direction -- \emph{decompiling} SPNs into BNs -- has received surprisingly little attention.
    In this paper, we fill this gap by proposing SPN2BN, an algorithm that decompiles an SPN into a BN.
    SPN2BN has several salient features when compared to the only other two works decompiling SPNs.
    Most significantly, the BNs returned by SPN2BN are \emph{minimal} independence-maps that are more parsimonious with respect to the introduction of \emph{latent} variables.
    Secondly, the output BN produced by SPN2BN can be precisely characterized with respect to a compiled BN.
    More specifically, a certain set of directed edges will be added to the input BN, giving what we will call the \emph{moral-closure}.
    Lastly, it is established that our compilation-decompilation process is idempotent.
    This has practical significance as it limits the size of the decompiled SPN.
\end{abstract}
\begin{keywords}
Probabilistic graphical models; Sum-product networks; Bayesian networks
\end{keywords}

\section{Introduction}
\label{sec:introduction}

There exists a trade-off between classical probabilistic graphical models and recent tractable probabilistic models.
Classical models, such as \emph{Bayesian networks} (BNs) \cite{pear88}, provide high-level interpretability as conditional independence assumptions are directly reflected in the underlying graphical structure.
However, the downside is that performing exact inference in BNs is NP-hard \cite{coop90}.
In contrast, modern tractable probabilistic models, such as \emph{sum-product networks} \cite{darwiche09,poon2011sum}, allow a wide range of tractable inference, but are harder to interpret.
In order to combine advantages of both BNs and SPNs -- which are complementary regarding interpretability and inference efficiency -- tools to convert back and forth between these types of models are vital.

The direction compiling BNs into SPNs is well understood due to Darwiche's work on BN compilation into \emph{arithmetic circuits} (ACs) \cite{darwiche09}.\footnote{ACs and SPNs are equivalent models. \emph{Deterministic} models are typically referred to as ACs, while \emph{non-deterministic} models are called SPNs. See Section \ref{sec:back} for further details.}
Since inference in ACs and SPNs can be performed in linear time of the network size, compilation amounts to finding an inference machine with minimal inference cost.
ACs can also take advantage of \emph{context-specific-independence} \cite{bout96} in the BN parameters to further reduce the size of the AC.

The converse direction of SPN decompilation into BNs has received limited attention.
This lack of attention can be understood historically.
Since an original purpose of ACs was to serve as efficient inference machine for a \emph{known} BN, decompilation would seem like a mere academic exercise. 
The proposition of SPNs, however, introduced some practical changes to the AC model.
First, unlike ACs, SPNs are typically learned directly from data, i.e., a reference BN is not available.
Thus, providing a corresponding BN would greatly improve the interpretability of the learned SPN.
Second, as already mentioned, SPNs are typically non-deterministic, which naturally introduces an interpretation of SPNs as hierarchical latent variable models \cite{Peharz:2016wl,choi2017relaxing}.
A decompilation algorithm for SPNs should account for this fact, and generate BNs with a plausible set of latent variables.
Thus, naively decompiling SPNs with a decompilation algorithm devised for ACs would yield densely connected and rather uninterpretable BNs.

In this paper, we fill this void by formalizing SPN decompilation.
We propose SPN2BN, an algorithm that converts a trained SPN into a BN.
Our algorithm arguably improves over the only two other approaches in the literature addressing the connection between BNs and SPNs \cite{zhao2015relationship} and \cite{Peharz:2016wl}.
First, while \cite{Peharz:2016wl} produce a BN for a given SPN, these BNs are, in general, not minimal \emph{independence-maps} (I-maps) \cite{pear88}, i.e., they introduce needless dependence assumptions. 
Our algorithm SPN2BN, on the other hand, produces \emph{minimal} I-maps.
Second, \cite{zhao2015relationship} is excessive with the number of introduced latent variables.
In fact, both approaches interpret each single sum node in an SPN as a latent variable on its own.
In this paper, we devise a more economical approach and identify groups of sum nodes to jointly represent one latent variable.
This grouping is based on whether sum nodes are ``on the same level of circuit hierarchy'' and ``responsible'' for the same set of observable variables (these notions will be made formal in Section \ref{sec:dec}).

These design choices for SPN2BN improve over \cite{zhao2015relationship} and \cite{Peharz:2016wl} both in terms of a reduced number of BN nodes (latent variables) and a reduced number of edges (minimal I-mapness).
While this design leads to more succinct and perhaps more esthetic BNs, SPN2BN is also justified in a formal way.
We show that SPN2BN can be seen as the inverse of the SPN compilation process proposed in \cite{darwiche2003differential}.
Consider an arbritary BN ${\cal B}$ that was compiled into an AC with \emph{variable elimination} following a \emph{reverse topological order} (VErto).
Convert the AC into an SPN ${\cal S}$ with an optional marginalization operation\footnote{SPNs are closed under marginalization, that is, any sub-marginal of any SPN can again be represented as an SPN.} \cite{darwiche09} creating latent variables by rendering random variables unobserved.
Then, decompiling SPN ${\cal S}$ with SPN2BN yields a BN ${\cal B}^c$ with a set of directed edges that are a superset of the original BN, which we call the \emph{moral closure} of ${\cal B}$.
The SPN2BN algorithm is consistent with respect to this compilation algorithm, in the sense that it always yields the moral closure ${\cal B}^c$ of any given BN ${\cal B}$.
Repeating the compilation-decompilation process with the morally closed BN is then idempotent, i.e.~compilation and decompilation are consistent inverses of each other.
Consistency with a compilation procedure is arguably a desirable property as it serves as a characterization of the
decompilation method.
In contrast, \cite{zhao2015relationship} and \cite{Peharz:2016wl} are not consistent with any general-purpose compilation algorithm, and tend to excessively increase the number of variables and edges in the constructed BN.
Lastly, even when the input SPN does not stem from an assumed compiler, e.g., when it is learned from data, the VErto compilation assumption within SPN2BN helps us to interpret the result of decompilation.

For example, consider a prominent example of a BN in Figure \ref{fig:comp_hmm} (\subref{subfig:bn_hmm}), commonly known as a hidden Markov model (HMM).
In Figure \ref{fig:comp_hmm} (\subref{subfig:spn_hmm}), we see the result of VErto, followed by marginalization of $H_1$, $H_2$, and $H_3$ deeming these three variables latent.
This SPN shall be converted back into a BN.
In Figure \ref{fig:decomp_hmm} (\subref{subfig:zhao_hmm}, \subref{subfig:peharz_hmm}) we see the BNs produced by \cite{zhao2015relationship} and \cite{Peharz:2016wl}, respectively.
Both BNs introduce more variables than were present originally, and the introduced edges hardly reflect the succinct independence assumptions of the HMM.
In Figure \ref{fig:decomp_hmm} (\subref{subfig:us_hmm}), the decompilation result by our SPN2BN algorithm is depicted.
It can be seen that SPN2BN recovers the original HMM structure, where latent variables $Z_1$, $Z_2$, and $Z_3$ exactly correspond to the original latent variables $H_1$, $H_2$ and $H_3$, respectively.
Evidently, no decompilation is able to recover the original labels for these variables, since reference to these has been explicitly removed by the previous (optional) marginalization operation.
However, we see that SPN2BN successfully detects their signature in the compiled SPN, enabling it to recover an equivalent set of latent variables. 


\begin{figure}[tb]
    \begin{center}
    \begin{subfigure}[t]{0.20\linewidth}
      \centering
        \includegraphics[width=\linewidth]{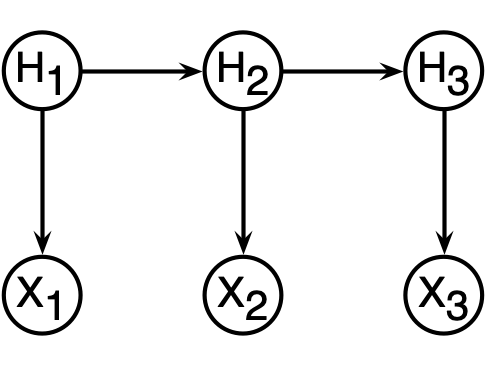}
           \caption{}
           \label{subfig:bn_hmm}
    \end{subfigure}
    \begin{subfigure}[t]{0.30\linewidth}
      \centering
        \includegraphics[width=\linewidth]{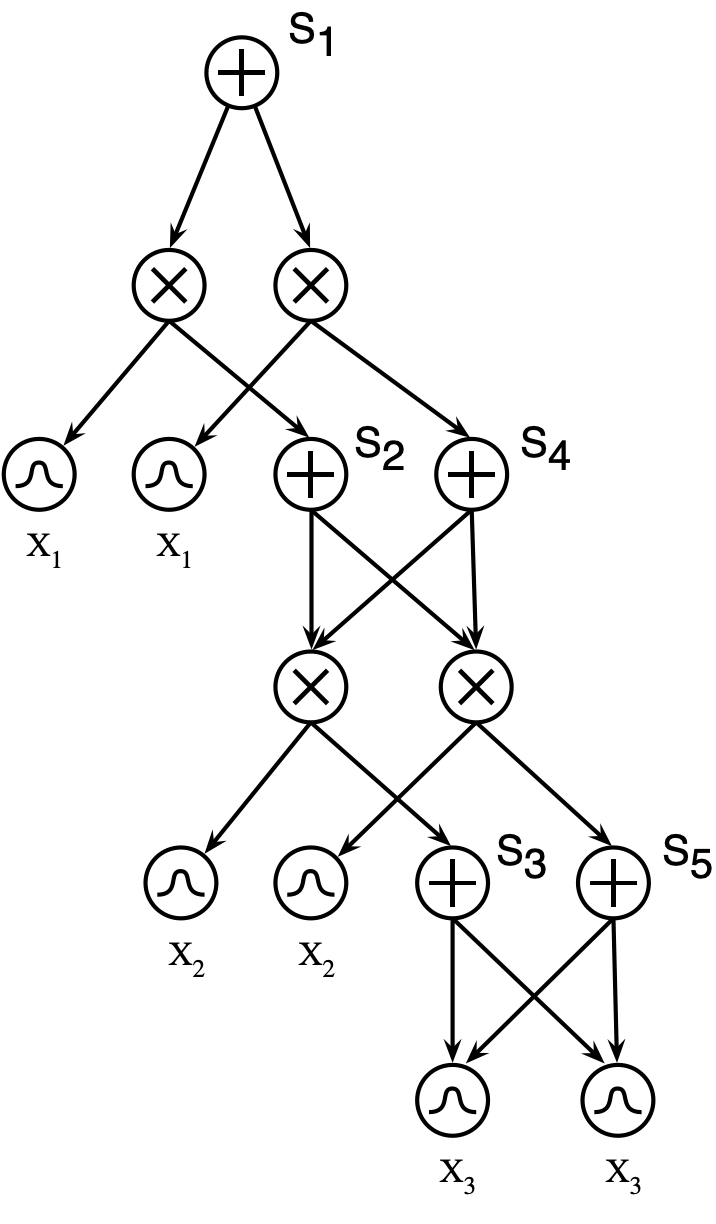}
           \caption{}
           \label{subfig:spn_hmm}
    \end{subfigure}
    \caption{Compilation of the BN in (\subref{subfig:bn_hmm}) using VErto \protect\cite{darwiche09}, and marginalizing $H_1$, $H_2$, and $H_3$, yields the SPN in (\subref{subfig:spn_hmm}).}
    \label{fig:comp_hmm}
    \end{center}
  \end{figure}

  \begin{figure*}[tb]
    \begin{center}
    \begin{subfigure}[t]{0.25\linewidth}
      \centering
        \includegraphics[width=\linewidth]{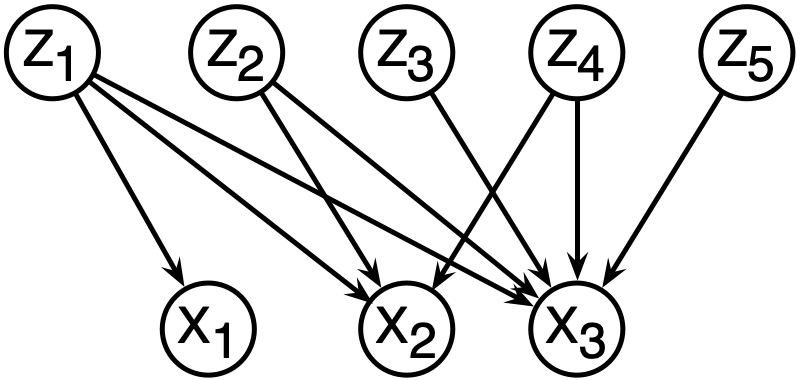}
           \caption{}
           \label{subfig:zhao_hmm}
    \end{subfigure}
    \begin{subfigure}[t]{0.22\linewidth}
      \centering
        \includegraphics[width=\linewidth]{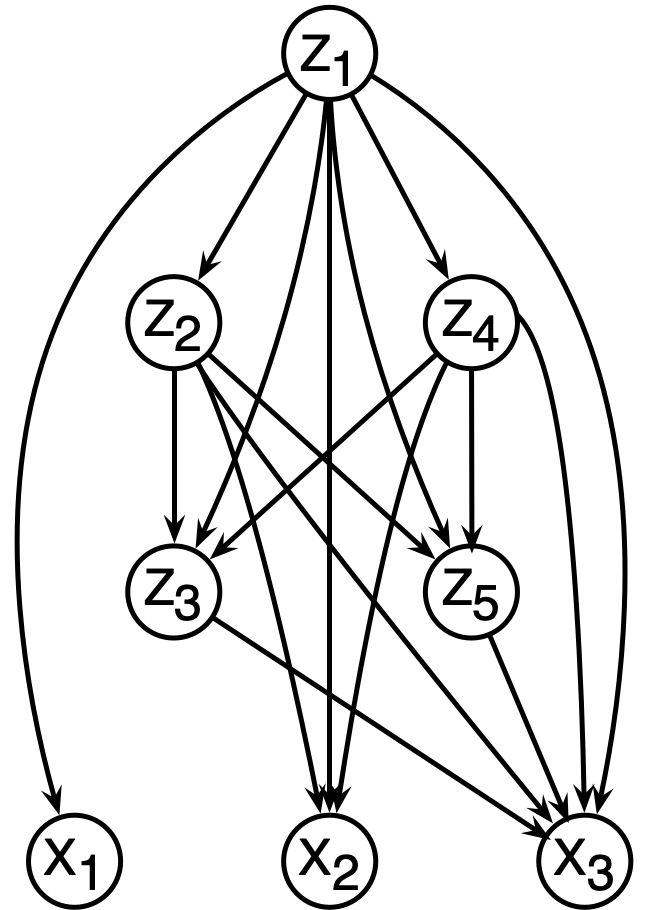}
           \caption{}
           \label{subfig:peharz_hmm}
    \end{subfigure}
    \begin{subfigure}[t]{0.20\linewidth}
      \centering
        \includegraphics[width=\linewidth]{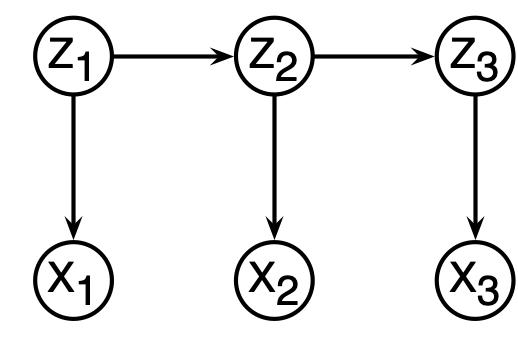}
           \caption{}
           \label{subfig:us_hmm}
    \end{subfigure}
    \caption{Decompilation of the SPN in Figure \ref{fig:comp_hmm} (\subref{subfig:spn_hmm}) by \protect\cite{zhao2015relationship} in (\subref{subfig:zhao_hmm}), \protect\cite{Peharz:2016wl} in (\subref{subfig:peharz_hmm}), and SPN2BN in (\subref{subfig:us_hmm})}
    \label{fig:decomp_hmm}
    \end{center}
  \end{figure*}
\section{Sum-Product Networks}
\label{sec:back}

Here, we review BNs, ACs, and SPNs, as well as the compilation of BNs into SPNs.

We denote \emph{random variables} (RVs) by uppercase letters, such as $X$ and $Y$, possibly with subscripts, and their values by corresponding lowercase letters $x$ and $y$.
Sets of RVs are denoted by boldfaced uppercase letters and their combined values by corresponding boldfaced lowercase letters.
The \emph{children} of a variable $V$ in a \emph{directed acyclic graph} (DAG) ${\cal B}$, denoted $Ch(V)$, are the immediate descendants of $V$ in ${\cal B}$.
Similarly, the \emph{parents} $Pa(V)$ of a variable $V$ are immediate ancestors of $V$ in ${\cal B}$.
The \emph{descendants} $De(V)$ are the variables $V^\prime$ with a directed path from $V$ to $V^\prime$ in ${\cal B}$.
The \emph{ancestors} $An(V)$ of $V$ are similarly defined.
A variable $V_k$ is called a \emph{v-structure} in a DAG ${\cal B}$, if directed edges $(V_i,V_k)$ and $(V_j,V_k)$ appear in ${\cal B}$, where $V_i$ and $V_j$ are non-adjacent variables in ${\cal B}$.

The independency information encoded in a DAG can be read graphically by the \emph{d-separation} algorithm in linear time \cite{geigerVermaPearl89}.

\begin{definition}
    \cite{pear88} If ${\bf X}$, ${\bf Y}$, and ${\bf Z}$ are three disjoint subsets of nodes in a DAG ${\cal B}$, then ${\bf Z}$ is said to \emph{d-separate} ${\bf X}$ from ${\bf Y}$, denoted $I({\bf X},{\bf Z},{\bf Y})$, if along every path between a node in ${\bf X}$ and a node in ${\bf Y}$ there is a node $W$ satisfying one of the following two conditions: (i) $W$ is a v-structure and neither $W$ nor any of its descendants are in ${\bf Z}$, or (ii) $W$ is not a v-structure and $W$ is in ${\bf Z}$.
\end{definition}

The next definition formalizes when a DAG is an I-map of a joint probability distribution (JPD).

\begin{definition}
    \cite{darwiche09} Let ${\cal B}$ be a DAG and $P$ be a JPD over the same set of variables.
    ${\cal B}$ is an \emph{I-map} of $P$ if and only if every conditional independence read by d-separation on ${\cal B}$ holds in the distribution $P$.
    An I-map ${\cal B}$ is \emph{minimal}, if ${\cal B}$ ceases to be an I-map when we delete any edge from ${\cal B}$.
\end{definition}

BNs are DAGs with nodes representing variables and edges representing variable dependencies, in which the strength of these relationships are quantified by conditional probability tables (CPTs).
More formally, a BN over variables ${\bf X}$ has its CPTs defined over each variable given its parents, that is, $P(V | Pa(V))$, for every $V \in {\bf X}$.
One salient feature is that the product of the BN CPTs yields a JPD $P$ over ${\bf X}$.

\begin{definition}
    \cite{pear88} Given a JPD $P$ on a set of variables ${\bf X}$, a DAG ${\cal B}$ is called a \emph{Bayesian network} (BN) of $P$ if ${\cal B}$ is a minimal I-map of $P$.
\end{definition}

In a BN, the independencies read by d-separation in the DAG ${\cal B}$ are guaranteed to hold in the JPD $P$.
Unfortunately, while BNs have clear interpretability, exact inference in BNs is NP-hard \cite{coop90}.

BNs can be compiled into \emph{Arithmetic Circuits} (ACs) \cite{darwiche2003differential} by graphically mapping the operations performed when marginalizing all variables from the BN.
\begin{definition}
    \cite{darwiche2003differential} An arithmetic circuit (AC) over variables $\bf U$ is a rooted DAG whose leaf nodes are labeled with numeric constants, called parameters, or $\lambda$ variables, called indicators, and whose other nodes are labeled with multiplication and addition operations.
\end{definition}

Notice that parameter variables are set according to the BN CPTs, while indicator variables are set according to any observed evidence.    

SPNs are a probabilistic graphical model that can be learned from data using, for instance, the LearnSPN algorithm \cite{gens2013learning}.

\begin{definition}
    \cite{poon2011sum} A \emph{sum-product network} (SPN) is a DAG containing three types of nodes: leaf distributions, sums, and products.
    Leaves are tractable distribution functions over ${\bf Y} \subseteq {\bf X}$.
    Sum nodes $S$ compute weighted sums $S = \sum_{N \in Ch(S)}{w_{S,N} N}$, where $Ch(S)$ are the children of $S$ and $w_{S,N}$ are weights that are assumed to be non-negative and normalized \cite{peharz2015theoretical}.
    Product nodes $P$ compute $P = \prod_{N \in Ch(P)}{N}$.
    The value of an SPN, denoted ${\cal S}({\bf x})$, is the value of its root.    
\end{definition}

The \emph{scope} of a sum or product node $N$ is recursively defined as $sc(N) = \bigcup_{C \in Ch(N)}{sc(C)}$, while the scope of a leaf distribution is the set of variables over which the distribution is defined.
A \emph{valid} SPN defines a JPD and allows for efficient inference \cite{poon2011sum}.
The following two structural constraints on the DAG guarantee validity.
An SPN is \emph{complete} if, for every sum node, its children have the same scope.
An SPN is \emph{decomposable} if, for every product node, the scopes of its children are pairwise disjoint.
Valid SPNs are of particular interest because they represent a JPD over the variables in the problem domain.
In addition, like ACs, exact inference is linear in the size of the DAG.
Unlike ACs, however, SPNs allow for a latent variable (LV) interpretation \cite{Peharz:2016wl}.


In \cite{poon2011sum}, it was suggested that SPNs can be interpreted as hierarchical latent variable model, where each sum node corresponds to a latent, marginalized random variable.
This interpretation can be made explicit, by incorporating the latent variables \emph{explicitly} in the SPN, yielding the so-called augmented SPN \cite{Peharz:2016wl}.
We briefly review the construction of the augmented SPN here:
first, for each sum node $S$ we postulate a random variable $Z_S$ with $|Ch({S})|$ states, i.e.~each state of $Z_S$ corresponds to one of $S$'s children.
The states of $Z_S$ are represented via indicators $\lambda_{{Z_S}=k}$, which are explicitly introduced in the SPN.
Furthermore, the sum node $S = \sum_{k} w_{S, k} C_k$ is replaced with $S = \sum_{k} w_{S, k} C_k * \lambda_{Z_S = k}$. 
This construction, originally proposed by \cite{poon2011sum}, allows us to switch $S$'s children ``off and on'' by setting the indicators of $Z_S$, and therefore interpret the children as distributions, conditioned on $Z_S$.

However, as observed in \cite{Peharz:2016wl}, this construction is in conflict with the completeness requirement of SPNs.
Consider a sum node $S^c \in An(S)$ with the property that it has a child $C \in Ch(S^c)$ which does not reach $S$, i.e., $S \notin De(C)$.
\cite{Peharz:2016wl} call such $S^c$ a \emph{contitioning sum} of $S$.
Including the indicators as above renders $S^c$ incomplete, since some but not all children of $S^c$ reach $Z_S$.
This has the severe consequence that the tractable inference mechanism for SPNs is now invalid.
\cite{Peharz:2016wl} propose to fix this problem by introducing a new (dummy) sum node $\bar{S}$, which has only the indicators $\lambda_{Z_S=k}$, $k=1\dots,|Ch(S)|$ as children, the so-called \emph{twin sum} of $S$.
Each child $C \in Ch(S^c)$ which does not reach $S$ is now replaced with $C * \bar{S}$.
As shown in \cite{Peharz:2016wl}, this process leads to a consistent augmentation of the SPN in that sense that it explicitly manifests a random variable for $S$, while i) maintaining completeness and decomposability, and ii) leaving the marginal distribution over observed variables $X$ unchanged.
For further details, see \cite{Peharz:2016wl}, in particular Algorithm AugmentSPN.
\section{SPN Decompilation}
\label{sec:dec}

In this section, we formalize SPN decompilation into a BN.

The interpretation of SPNs as latent variable models is not unique.
For instance, every SPN sum node can be viewed itself as a LV, as done in \cite{zhao2015relationship,peharz2015theoretical}.
In stark contrast, all SPN sum nodes can be interpreted as one single LV \cite{Peharz2015-thesis}.
This provides a wide spectrum of interpretations based only on those sum nodes appearing in an SPN.
In addition, external LVs can be introduced to an SPN, such as the \emph{switching parents} in \cite{peharz2015theoretical}.
Thus, for a given SPN learned from data, there are seemingly countless possible interpretations of its latent space.

The approach taken in this paper is to make a \emph{compilation assumption}, i.e., to assume that there exist some underlying BN which was compiled into the SPN at hand.
While there are many possible ways to compile a BN into an SPN, we use arguably the most prominent compilation method, \emph{Variable Elimination} (VE) \cite{zhan94} following any \emph{reverse topological order} (VErto).
In particular, the recursive marginalization of variables during VE generate hierarchical layers of sum nodes, which typically appear in SPNs.
Besides assuming that the underlying SPN was generated from a BN, we further assume that some of the BN's variables have been removed, that is, marginalized from the model.
The task of decompilation can than be formulated to recover the original BN structure as far as possible.
The main contributions in this paper are to i) provide such an algorithm SPN2BN, and ii) show that it indeed recovers the \emph{morally closed} version of the original BN (see Definition \ref{def:moral} for moral closure).

\begin{algorithm}[tb]
    \caption[]{SPN Compilation Assumption}
    \label{alg:comp_assum}
    \begin{algorithmic}[1]
    \State {\bf BN2SPN}(${\cal B}$)
    \State Let $\sigma$ be a reverse topological ordering of ${\cal B}$ \Comment{VE with reverse topological order (VErto)}
    \State ${\cal C}$ = compile-to-AC-with-VE(${\cal B}$,$\sigma$) \label{ln:comp}
    \State ${\cal S}$ = redistribute-parameters(${\cal C}$) \label{ln:params} \Comment{Convert AC to SPN}
    \State ${\cal S}$ = compile-marginalized-spn(${\cal S}$) \label{ln:marg_spn}
    \State $changed = {\bf true}$
    \While{$changed$}
        \State Let ${\cal S}^\prime$ be a copy of ${\cal S}$
        \State ${\cal S}$ = add-terminal-nodes(${\cal S}$) \label{ln:term_nds}
        \State ${\cal S}$ = remove-products-of-products(${\cal S}$) \label{ln:prod_of_prod}
        \State ${\cal S}$ = lump-products(${\cal S}$) \label{ln:prod_same_ch} \Comment{Lump products over the same children}
        \If{${\cal S} == {\cal S}^\prime$}
            \State $changed = {\bf false}$
        \EndIf
    \EndWhile
    \State Return ${\cal S}$
\end{algorithmic}
\end{algorithm}

Algorithm \ref{alg:comp_assum} describes our compilation assumption, VErto followed by optionally marginalizing some variables.
In line 4, a given BN B is converted into an A ${\cal C}$ using VErto \cite{darwiche2003differential}.
In line 6, the leaf parameters are redistributed as sum-weights \cite{rooshenas2014learning}, yielding an SPN ${\cal S}$.
In line 7, we assume that all internal latent variables in ${\cal S}$ are marginalized and, thus, all of their indicator variables are set to 1.
Here, any arbitrary subset of the internal latent variables can be considered.
Next, we recursively simplify ${\cal S}$ by applying three operations until no further change can be made.
In line 11, a sum node with only indicator nodes as children is converted into a \emph{terminal node} \cite{zhao2015relationship}, which is a univariate distribution over the indicator variable.
Product nodes whose children are exclusively products, i.e., chains of products are simplified into a single product node in line 12.
Finally, in line 14, if two or more product nodes have the same set of children, then they are lumped into a single product node.

On the other hand, by \emph{decompilation}, we mean the procedure of converting an SPN into a BN.
This process involves determining the RVs and DAG for the BN.
We can suggest RVs for the BN by analyzing the compilation assumption.
Similarly, an I-map can be obtained as a DAG using the SPN DAG.
We now formalize these ideas.

\begin{definition}
    Given an SPN over RVs ${\bf X}$ and a compilation assumption, SPN \emph{decompilation} is an algorithm that both: (i) suggests a set of LVs ${\bf Z}$, and; (ii) produces an I-map over ${\bf X}$ and ${\bf Z}$.
\end{definition}

Task (i) of SPN decompilation is more involved than expected.
A naive approach is to disregard the compilation assumption and treat each sum node as one LV.
Negative consequences of this approach will be discussed in the next section.
We suggest a more elegant approach by interpreting the effect of the compilation assumption on the graphical characteristics of the SPN.

Recall that we assume the SPN was compiled using VErto.
During compilation, marginalizing variables creates groups of sum nodes in the same layer (the distance of the longest path from the root).
Hence, identifying these groups is a way of suggesting RVs for the decompiled BN.

More formally, given a sum node $S$, the \emph{sum-depth} of $S$ is the number of sum nodes in the longest directed path from the root to $S$.

\begin{example}
    The sum-depth of sum node $S_3$ in the SPN of Figure \ref{fig:comp_hmm} (\subref{subfig:spn_hmm}) is 2, since there are 2 sum nodes on the longest path from the root to $S_3$.
    Similarly, the sum-depth of $S_2$ is 1 and of $S_1$ is 0.
\end{example}

A \emph{sum-layer} is the set of all sum nodes having the same sum-depth.

\begin{example}
    One sum-layer in the SPN of Figure \ref{fig:comp_hmm} (\subref{subfig:spn_hmm}) consists of $S_3$ and $S_5$, since both $S_3$ and $S_5$ have a sum-depth of 2.
    Furthermore, $S_2$ and $S_4$ form another sum-layer, as does $S_1$ by itself.
\end{example}

A \emph{sum-region} is the set of all sum-nodes within the same sum-layer and having the same scope.

\begin{example}
    Sum-layer $S_3$ and $S_5$ in the SPN of Figure \ref{fig:comp_hmm} (\subref{subfig:spn_hmm}) has only one sum-region, since $S_3$ and $S_5$ have the same scope.
    For the same reason, sum-layer $S_2$ and $S_4$ also has only one sum-region.
\end{example}

A sum-region is created by marginalizing variables during our compilation assumption.
Thus, to answer task (i) of SPN decompilation, we suggest that ${\bf Z}$ consists of one LV per sum-region.

\begin{example}
    In the SPN of Figure \ref{fig:comp_hmm} (\subref{subfig:spn_hmm}), we suggest three LVs in ${\bf Z} = \{Z_{S_1}, Z_{S_2}, Z_{S_3}\}$, namely, one per sum-region.
\end{example}

We now turn our attention to task (ii) of SPN decompilation, that is, constructing an I-map over ${\bf X}$ and ${\bf Z}$.
Augment the SPN as done in \cite{Peharz:2016wl}.
However, before continuing, we need to correct the notion of a conditioning sum node for the following reason.
Consider sum node $S_3$ in the SPN of Figure \ref{fig:comp_hmm} (\subref{subfig:spn_hmm}).
\cite{Peharz:2016wl} would not define sum node $S_1$ as a conditioning sum node for $S_3$, even though $Z_{S_1}$ would appear as a conditioning variable for $Z_{S_3}$ in the CPT $P(Z_{S_3}|Z_{S_1},Z_{S_2},Z_{S_4})$, as depicted in the constructed I-map in Figure \ref{fig:decomp_hmm} (\subref{subfig:peharz_hmm}).

\begin{definition}
    An ancestor sum node $S$ of a node $N$ in an augmented SPN is called \emph{conditioning}, if it is not true that all children of $S$ reach exactly the same subset of $S$ and ${\bar S}$.
    \label{def:cod}
\end{definition}

\begin{example}
    Consider sum node $S_3$ in the SPN of Figure \ref{fig:comp_hmm} (\subref{subfig:spn_hmm}).
    Ancestor sum node $S_2$ is conditioning, since the left-most child of $S_2$ reaches $S_3$, but the right-most child does not.
    Node $S_1$ is not conditioning for $S_3$, since all children of $S_1$ reach the same subset $S_3$ and ${\bar S}_3$ in the augmented SPN.
    \label{ex:conditioning}
\end{example}

In Example \ref{ex:conditioning}, observe that $S_1$ is not a conditioning sum node for $S_3$ and hence $Z_{S_1}$ does not appear as parent of $Z_{S_3}$ in our constructed I-map in Figure \ref{fig:decomp_hmm} (\subref{subfig:us_hmm}).

The SPN decompilation techniques described thus far are formalized as Algorithm \ref{alg:decomp}.

\begin{algorithm}[tb]
    \caption[]{SPN Decompilation}
    \label{alg:decomp}
    \begin{algorithmic}[1]
    \State{\bf SPN2BN}(${\cal S}$)
    \State Let $L$ be the list of sum-layers in ${\cal S}$ 
    \State Let ${\bf Z}$ denote a mapping from nodes to LVs
    \State ${\bf S} = \emptyset$ \Comment{Initialization of Scopes}
    \For{each node $N$ in ${\cal S}$}
        \State ${\bf Z}[N] = \emptyset$
    \EndFor
    \State \Comment{Phase (i) suggests a set ${\bf Z}$ of LVs}
    \For{each $l$ in $L$} \Comment{for each layer}
        \For{each node $N$ in $l$} \Comment{for each node}
            \State Let $X$ denote $scope(N)$
            \If{$X \in {\bf S}$}
                \State Let $Z$ be the existing LV for scope $X$
                \State ${\bf Z}[N] = Z$
            \Else
                \State Let $Z^\prime$ be a new LV
                \State ${\bf Z}[N] = Z^\prime$ \Comment{Update scopes}
                \State ${\bf S} = {\bf S} \cup \{X\}$
            \EndIf
        \EndFor
    \EndFor
    \State \Comment{Phase (ii) produces an I-map over ${\bf X}$ and ${\bf Z}$}
    \For{each node $N$ in ${\cal S}$}
        \If{$N$ is a sum or leaf node}
            \For{each sum node $S^\prime$ in $An(N)$}
                \If{$S^\prime$ is conditioning w.r.t. $N$}
                    \State Add edge $({\bf Z}[S^\prime], N)$ to ${\cal B}$
                \EndIf
            \EndFor
        \EndIf
    \EndFor
    \State Return ${\cal B}$
\end{algorithmic}
\end{algorithm}

\begin{example}
    Given the SPN in Figure \ref{fig:comp_hmm} (\subref{subfig:spn_hmm}) as input to SPN2BN(${\cal S}$), the decompiled BN is given in Figure \ref{fig:decomp_hmm} (\subref{subfig:us_hmm}).
\end{example}

\section{Theoretical Foundation}
\label{sec:th}

In this section, we first establish important properties of SPN decompilation.
Later, we show a favorable characteristic of our compilation assumption and Algorithm \ref{alg:decomp}.

\subsection{On SPN Decompilation}

Our decompilation algorithm is parsimonious with the introduction of LVs.
One LV is assigned per sum-region rather than one per sum node.

Regarding I-map construction, we first show the correctness of the I-map, and then establish that the constructed I-map is minimal.

The I-map correctness follows from the CPT construction suggested in \cite{Peharz:2016wl}.
Theorem 1 in \cite{Peharz:2016wl} shows that certain independencies necessarily hold in an SPN, namely, each LV $Z_S$ for a sum node $S$ is conditionally independent of all non-descendant sum nodes given all ancestor sum nodes of $S$, denoted ${\bf Z}_P$.
A CPT $P(Z_S|{\bf Z}_P)$ is constructed for their I-map.
The I-map built in Algorithm \ref{alg:decomp} uses the same CPT probability values, except building the CPT $P(Z_S|{\bf Z}_C)$, where ${\bf Z}_C$ are those conditioning nodes defined in Definition \ref{def:cod}.
Since ${\bf Z}_C \subseteq {\bf Z}_P$, the independencies encoded in the I-map of \cite{Peharz:2016wl} are a subset of those encoded by the I-map built by Algorithm \ref{alg:decomp}.
Thus, the I-maps proposed in \cite{Peharz:2016wl} are, in general, not minimal.

\begin{example}
    Consider sum node $S_3$ in the SPN of Figure \ref{fig:comp_hmm} (\subref{subfig:spn_hmm}).
    For \cite{Peharz:2016wl}, the CPT for $Z_{S_3}$ is $P(Z_{S_3} | Z_{S_1}, Z_{S_2}, Z_{S_4})$.
    Here, $Z_{S_3}$ is independent of $Z_{S_1}$, given $Z_{S_2}$ and $Z_{S_4}$.
    Thus, Algorithm \ref{alg:decomp} builds the smaller CPT $P(Z_{S_3} | Z_{S_2}, Z_{S_4})$ for $Z_{S_3}$.
\end{example}



The proof for these new conditional independencies and the correctness of our I-map is formalized in Lemma \ref{cor:cond_imap}.

\begin{lemma}
    Consider an augmented SPN ${\cal S}$ with a sum node $S$.
    Let ${\bf A}$ be all of $S$'s ancestors and ${\bf C}$ all of $S$'s conditioning sum nodes.
    Then, $P(Z_S | {\bf Z}_A) = P(Z_S | {\bf Z}_C)$.
\label{cor:cond_imap}
\end{lemma}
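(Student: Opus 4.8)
The plan is to prove the equivalent statement that $Z_S$ is conditionally independent of the non-conditioning ancestors ${\bf Z}_A \setminus {\bf Z}_C$ given the conditioning ancestors ${\bf Z}_C$; since ${\bf C} \subseteq {\bf A}$, this is precisely the claim that the extra conditioning variables in ${\bf Z}_A$ may be dropped without changing the conditional, i.e. $P(Z_S \mid {\bf Z}_A) = P(Z_S \mid {\bf Z}_C)$. I would start from Theorem 1 of \cite{Peharz:2016wl}, which guarantees that $P(Z_S \mid {\bf Z}_A)$ is a well-defined CPT in which $Z_S$ is independent of all its non-descendant sum nodes given ${\bf Z}_A$. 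It therefore suffices to show that each non-conditioning ancestor is redundant in this conditioning set, and by peeling the non-conditioning ancestors off one at a time it is enough to show that $P(Z_S = k \mid {\bf Z}_A = {\bf z}_A)$ is invariant to the value of a single non-conditioning ancestor $S'$ while the remaining ancestor values are held fixed.

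The key structural observation I would isolate first is that a non-conditioning ancestor reaches $S$ uniformly through all of its children. Indeed, since $S' \in An(S)$, at least one child of $S'$ lies on a directed path to $S$, so $S$ belongs to the common subset of $\{S, \bar{S}\}$ reached by the children (Definition \ref{def:cod}); hence \emph{every} child of $S'$ reaches $S$, and either all of them reach $\bar{S}$ or none do. Intuitively, this means that the choice recorded by $Z_{S'}$ has no bearing on whether, or how, the computation subsequently arrives at $S$ versus its twin $\bar{S}$, which is exactly the qualitative reason $Z_{S'}$ can be discarded.

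To turn this into the independence, I would express the conditional as a ratio of network-polynomial evaluations,
\[
P(Z_S = k \mid {\bf Z}_A = {\bf z}_A) = \frac{{\cal S}(Z_S = k, {\bf Z}_A = {\bf z}_A)}{{\cal S}({\bf Z}_A = {\bf z}_A)},
\]
where ${\cal S}(\cdot)$ denotes the SPN value with the listed indicators clamped and every other indicator set to $1$ (marginalized). Setting $Z_{S'} = j$ selects the $j$th child $C_j$ of $S'$ with weight $w_{S',j}$, so $S'$ contributes a factor $w_{S',j}\,{\cal S}_{C_j}(\cdots)$. Using decomposability, the scope of $C_j$ splits into the part containing $S$ (common to every child by the observation above) and the part disjoint from $S$; completeness forces this disjoint part to marginalize to a constant which, together with $w_{S',j}$, appears identically in numerator and denominator and therefore cancels. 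What survives is the contribution of the $S$-subtree, which is the same for every $j$, so the ratio does not depend on $z_{S'}$. Iterating over every non-conditioning ancestor shows that $P(Z_S = k \mid {\bf Z}_A = {\bf z}_A)$ depends on ${\bf z}_A$ only through ${\bf z}_C$; a final application of the law of total probability, marginalizing ${\bf Z}_{A \setminus C}$ against $P({\bf Z}_{A \setminus C} \mid {\bf Z}_C)$, identifies this common value with $P(Z_S \mid {\bf Z}_C)$.

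The main obstacle I anticipate is making the cancellation rigorous in the DAG setting: because an augmented SPN is not a tree, $S$ may be reachable along several paths and $\bar{S}$ may occur at several conditioning sums, so I cannot simply trace a unique path through $S'$. The delicate part is arguing, via decomposability and completeness, that the factor attached to the selection at a non-conditioning $S'$ genuinely separates from the $Z_S$-dependent factor regardless of re-convergence, together with the fact, supplied by the twin-sum construction, that the $Z_S$-marginal produced on the non-reaching branches agrees with the one produced at $S$ itself. Handling the subcase in which the children of $S'$ additionally all reach $\bar{S}$ is where I expect to spend the most care.
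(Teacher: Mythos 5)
Your proposal is correct and is essentially the paper's own argument: both proofs hinge on the consequence of Definition \ref{def:cod} that every child of a non-conditioning ancestor reaches $S$ (and $\bar{S}$) in the same way, and both then invoke the distributive law to cancel the resulting common factors between the numerator and denominator of the conditional, viewed as a ratio of SPN sum-of-products evaluations. The only difference is bookkeeping --- you peel off non-conditioning ancestors one at a time and finish with the law of total probability, whereas the paper pulls out the weight $w$ attached to $Z_S$'s instantiation and identifies the leftover sum-of-products with the probability of the conditioning event, so that $P(Z_S \mid {\bf Z}_A)$ and $P(Z_S \mid {\bf Z}_C)$ both equal $w$ --- and your sketch is, if anything, more candid about the DAG re-convergence subtlety, which the paper's proof glosses over as well.
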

\begin{proof}
    Let ${\bf N} = {\bf A} -{\bf C}$ be the non-conditioning ancestor sum nodes of $S$.
    While conditioning on ${\bf Z}_A$ selects a single path in the augmented SPN from the root to $S$, multiple paths may exist from the root to $S$ when conditioning only on ${\bf Z}_C$.
    By Definition \ref{def:cod}, however, all children of a non-conditioning node reach the same subset of $S$ and ${\bar S}$.
    Therefore, the weight $w$ corresponding to the instantiation of ${\bf Z}_S$ necessarily appears in every term of products in the summation.
    By the distributive law, $w$ can be pulled out of this summation of products.
    The resulting summation of products is precisely the SPN computation for the probability of the conditioning event.
    Hence, these two summation of products cancel each other out leaving the conditional probability of $P({\bf Z}_S|{\bf Z}_C)$ to be $w$.
    Thus, $P({\bf Z}_S|{\bf Z}_A) = P({\bf Z}_S|{\bf Z}_C)$.
\end{proof}

We next show that our constructed I-maps are minimal.

\begin{theorem}
    Let $S$ be an augmented SPN over RVs ${\bf X}$ and LVs ${\bf Z}$.
    Given $S$ as input, the algorithm SPN2BN builds a minimal I-map.
\label{the:alg_min_imap}
\end{theorem}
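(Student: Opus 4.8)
The goal is to show that the DAG $\mathcal{B}$ produced by SPN2BN is a \emph{minimal} I-map over ${\bf X} \cup {\bf Z}$. The plan is to establish the two halves of minimality separately: first that $\mathcal{B}$ is an I-map at all, and then that no edge can be deleted without destroying the I-map property. The I-map half is already essentially in hand: by the discussion preceding Lemma~\ref{cor:cond_imap}, the CPT attached to each latent variable $Z_S$ is $P(Z_S \mid {\bf Z}_C)$, where ${\bf Z}_C$ are exactly the conditioning sum nodes of $S$, and these are precisely the parents Algorithm~\ref{alg:decomp} draws into $Z_S$ (Phase (ii) adds an edge $({\bf Z}[S'],N)$ iff $S'$ is conditioning w.r.t.\ $N$). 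Since Theorem~1 of \cite{Peharz:2016wl} guarantees $P(Z_S \mid {\bf Z}_A)$ is a valid conditional for the augmented SPN's distribution, and Lemma~\ref{cor:cond_imap} gives $P(Z_S \mid {\bf Z}_A) = P(Z_S \mid {\bf Z}_C)$, the product of the SPN2BN CPTs reproduces the joint distribution, so $\mathcal{B}$ is an I-map. I would also need to cover the edges into the observable leaf nodes $N \in {\bf X}$, which Phase (ii) treats identically (leaf nodes also receive edges from their conditioning ancestors), invoking the same argument.

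The substantive work is \textbf{minimality}: I must show that for every edge $({\bf Z}[S'], N)$ in $\mathcal{B}$, deleting it breaks the I-map property. By a standard characterization (e.g.\ the local Markov formulation of minimality in \cite{pear88}), it suffices to show that each node $N$ is \emph{dependent} on every one of its parents conditioned on the remaining parents — equivalently, that $N$ is \emph{not} conditionally independent of any parent ${\bf Z}[S']$ given the other parents ${\bf Z}_C \setminus \{{\bf Z}[S']\}$. So the plan is: fix an edge from a conditioning node $S'$ into $N$, and exhibit that the distribution genuinely depends on $Z_{S'}$ once the other conditioning parents are held fixed. This is where I would unfold Definition~\ref{def:cod}: $S'$ being conditioning means its children do \emph{not} all reach the same subset of $S$ and $\bar S$, so there exist at least two children of $S'$ that route to $N$ differently. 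I would argue that toggling the state of $Z_{S'}$ between these two children changes the selected sub-distribution over $N$ in a way that the other parents cannot account for, hence a genuine dependence remains. Concretely, I expect to build two instantiations of ${\bf Z}_C$ agreeing on every coordinate except $Z_{S'}$ and show the resulting conditional $P(N \mid {\bf z}_C)$ differs.

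The main obstacle is the last step: converting the \emph{combinatorial} condition ``$S'$ is conditioning'' (children reach different subsets of $S,\bar S$) into a genuine \emph{probabilistic} non-independence $P(N \mid Z_{S'}, \text{rest}) \ne P(N \mid \text{rest})$. The subtlety is that the SPN weights are arbitrary non-negative normalized numbers, and in degenerate cases a structural difference might collapse to a numerical coincidence (e.g.\ zero weights, or two children inducing identical conditionals). I anticipate needing a genericity or non-degeneracy assumption on the weights — or restricting the claim to the \emph{structural} I-map (minimality with respect to the family of distributions the SPN can represent, rather than one fixed parameterization), which is the natural reading consistent with the augmented-SPN construction of \cite{Peharz:2016wl}. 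Handling this cleanly — stating the precise sense in which the dependence is forced by the structure while acknowledging measure-zero parameter collapses — is the crux I would spend the most care on; the rest reduces to tracing the distributive-law cancellation already set up in Lemma~\ref{cor:cond_imap} in the reverse direction.
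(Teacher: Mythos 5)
Your proposal takes essentially the same route as the paper's proof: the paper likewise establishes minimality by fixing an edge $(S_C, S_N)$ from a conditioning node, invoking Definition~\ref{def:cod} to obtain two children of $S_C$ that reach $S_N$ and its twin $\bar{S}_N$ differently, and concluding that toggling $Z_{S_C}$ changes the conditional distribution of $S_N$, so the edge cannot be deleted without violating I-mapness. The degeneracy issue you single out as the crux is exactly the point the paper elides --- its proof only asserts that the two weights ``can be different,'' which amounts to the structural/generic reading of minimality you propose, rather than a treatment of fixed parameterizations in which the weights happen to coincide.
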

\begin{proof}
    By contradiction, suppose that the constructed I-map ${\cal B}$ is not minimal.
    Then there exists a directed edge $(S_C, S_N)$ from a conditioning node $S_C$ for a sum node $S_N$ that can be deleted without destroying I-mapness.
    In particular, this means that
    \begin{align}
        P(S_N|Pa(S_N)) = P(S_N|Pa(S_N) - S_C). \label{eq:cond}
    \end{align}
    By Definition \ref{def:cod}, as $S_C$ is a conditioning sum for $S_N$, there exist at least two children $S_i$ and $S_j$ of $S_N$ that select different paths to $S_N$ and its twin ${\bar S}_N$.
    Since the respective weights $w_i$ and $w_j$ of $S_i$ and $S_j$ can be different, it immediately follows that Equation (\ref{eq:cond}) is not satisfied by the joint probability distribution $P({\bf X}{\bf Z})$ defined by ${\cal S}$.
    Thus, the I-mapness is violated, if $(S_C,S_N)$ is removed from ${\cal B}$.
    Therefore, by contradiction, ${\cal B}$ is a minimal I-map.
\end{proof}

One seeks minimal I-maps as non-minimal I-maps are not necessarily useful in practice \cite{darwiche09,pear88,koll09}.

\subsection{Compilation and Decompilation}

In this section, we first show that BN2SPN2BN, our compilation-decompilation algorithm, constructs a unique BN for a given set of original BNs.
A consequence of this is that BN2SPN2BN is idempotent.



We next show that the BN output by BN2SPN2BN can be different than the original BN.
In the reminder of this section, we assume $\prec$ is a fixed topological ordering of a given BN ${\cal B}$.

\begin{example}
    Consider the call BN2SPN2BN(${\cal B}$), where BN ${\cal B}$ has directed edges $\{(A,B), (B,E)$, $(C,D), (D,E)\}$.
    Then, the output BN has directed edges $\{(A,B), (B,E), (C,D), (D,E), (B,D)$, $(B,C)\}$.
    \label{ex:recovery}
\end{example}


Notice that the directed edges of the original BN are a subset of those in the output BN.

\begin{definition}
    A \emph{directed moralization edge} is a directed edge $(V_i,V_j)$ added between two non-adjacent vertices $V_i$ and $V_j$ in a given BN ${\cal B}$ whenever there exists a variable $V \in {\cal B}$ such that $V \in Ch(V_i)$ and $V \in Ch(V_j)$, where $V_i \prec V_j$.
\end{definition}




We now introduce the key notion of moral closure.

\begin{definition}
    Given a BN ${\cal B}$ and a fixed topological order $\prec$ of ${\cal B}$, the \emph{moral closure} of ${\cal B}$, denoted ${\cal B}^c$, is the unique BN formed by iteratively augmenting ${\cal B}$ with all directed moralization edges.
    \label{def:moral}
\end{definition}


We are now ready to present the first main result of our compilation-decompilation process.

\begin{theorem}
    Given a BN ${\cal B}$ and a fixed topological order $\prec$ of ${\cal B}$, the output of the compilation-decompilation algorithm BN2SPN2BN is the moral closure ${\cal B}^c$ of ${\cal B}$.
    \label{th:rec_moral}
\end{theorem}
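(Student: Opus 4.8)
The plan is to exhibit a structure-preserving correspondence between the variable-elimination run that underlies VErto and the conditioning relation that Algorithm \ref{alg:decomp} uses to draw edges, and then to show that this relation is exactly the adjacency of the moral closure. I would first fix the run of VE in the reverse topological order dual to $\prec$ and record, for each eliminated variable $V$, the product node that multiplies all current factors mentioning $V$ together with the sum node $S_V$ that sums $V$ out. Since VErto eliminates sinks first and sources last, the sum-depth of $S_V$ is monotone in the elimination step, so $S_{V'}$ is an ancestor sum node of $S_V$ precisely when $V' \prec V$; this lets me identify the sum-region LV at $S_V$ with $V$ (for marginalized variables) and, symmetrically, treat each observed variable as the leaf node processed in the second loop of Algorithm \ref{alg:decomp}. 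I would also note that redistribute-parameters and the simplification loop of Algorithm \ref{alg:comp_assum} only reshape products and terminals and never change which variables co-occur in a factor, so the conditioning structure is governed entirely by the raw interaction pattern of the elimination.

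The heart of the argument is a characterization of Definition \ref{def:cod} in VE terms: an ancestor $S_{V_i}$ is conditioning for the node indexed by $V_j$ if and only if $V_i$ is a current neighbour of $V_j$ at the moment $V_j$ is eliminated, i.e.\ iff $V_i$ and $V_j$ occur together in a common factor during the run. The mechanism is the one already exploited in Lemma \ref{cor:cond_imap}: if $V_i$ and $V_j$ never share a factor, then every child of $S_{V_i}$ reaches the same subset of $S_{V_j}$ and $\bar S_{V_j}$, the corresponding weight factors out of the summation, and $S_{V_i}$ is non-conditioning; conversely a shared factor forces two children of $S_{V_i}$ to reach different subsets, which is exactly the failure of the condition in Definition \ref{def:cod}. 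Hence Algorithm \ref{alg:decomp} adds the edge $(Z_{V_i},Z_{V_j})$ (with $V_i \prec V_j$) if and only if $V_i$ and $V_j$ co-occur in some factor of the VE run.

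It then remains to match the VE co-occurrence relation with the directed moralization edges of Definition \ref{def:moral}. I would argue this by induction on the elimination order, maintaining the invariant that after each elimination the set of co-occurring pairs produced so far equals the set of edges present in the current iterate of the moral-closure construction. Eliminating $V$ makes all of $V$'s current neighbours pairwise co-occur in the new factor; when those neighbours are two parents of $V$ this is precisely the moralization step, and when some of them became neighbours through earlier fill-in it is precisely the iterative augmentation that Definition \ref{def:moral} performs. Orienting each new pair by $\prec$ reproduces the orientation convention of a directed moralization edge, so the step adds exactly the moralization edges among $V$'s current parents. Together with the observation that the original CPT factors $P(V\mid Pa(V))$ place the original edges of ${\cal B}$ into the co-occurrence relation, this yields both inclusions and hence equality with ${\cal B}^c$. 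As an independent cross-check one may invoke Theorem \ref{the:alg_min_imap} together with the uniqueness of the minimal I-map for a fixed ordering: since the output is a minimal I-map compatible with $\prec$, it suffices that ${\cal B}^c$ is also the minimal I-map of the SPN distribution for $\prec$, which the same VE analysis supplies.

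I expect the main obstacle to be the second step, translating the purely combinatorial ``conditioning sum node'' condition into the VE co-occurrence statement, because it requires tracking through the augmented product/sum/terminal structure exactly which children of an ancestor sum node reach $S_{V_j}$ and its twin $\bar S_{V_j}$, and confirming that this reachability pattern depends only on whether $V_i$ and $V_j$ shared a factor. The twin-sum bookkeeping introduced by augmentation, and the need to show that the simplifications in Algorithm \ref{alg:comp_assum} leave this pattern intact, are the most delicate parts; a secondary subtlety is ensuring that fill-in interactions created by earlier eliminations surface as genuine ancestor conditioning relations in the SPN, which the induction on elimination order is arranged to guarantee.
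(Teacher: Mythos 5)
Your proposal is correct and follows essentially the same route as the paper's proof: both arguments reduce the theorem to (a) the fact that the fill-in edges created by eliminating variables in reverse topological order are exactly the recursively added directed moralization edges of Definition \ref{def:moral}, and (b) the fact that the decompilation step reproduces precisely this interaction structure. The only difference is one of rigor rather than strategy: your key lemma (an ancestor sum node $S_{V_i}$ is conditioning for the region of $V_j$ iff $V_i$ and $V_j$ co-occur in a factor of the VE run) is exactly the content that the paper compresses into the single informal sentence that ``BN2SPN builds an SPN following the hierarchy in the moral closure and SPN2BN unwinds the SPN following this same hierarchy,'' so your write-up actually makes the weakest step of the published proof explicit.
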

\begin{proof}
    Applying BN2SPN2BN on ${\cal B}$ involves running BN2SPN followed by SPN2BN.
    Consider running BN2SPN on ${\cal B}$.
    This involves eliminating all variables from ${\cal B}$ following an elimination ordering $\sigma$.
    It is well-known that this process builds a triangulated, undirected graph \cite{pear88}.
    Triangulated, undirected graphs admit a perfect numbering.
    This means that no fill-in edges need to be added to the triangulated, undirected graph when eliminating variables following the perfect numbering.

    Now $\sigma$ itself is a perfect numbering for the triangulated graph built by eliminating variable following $\sigma$.
    Let us focus on the undirected edges that were added to ${\cal B}$.
    There are two cases to consider, i.e., moralization edges and triangulation edges.
    A moralization edge corresponds to a directed moralization edge in our case.

    A triangulation edge $(V_i,V_j)$ is added if and only if $V_i$ and $V_j$ are non-adjacent parents of a common child $V_k$ in the directed case.
    However, these edges are precisely the directed moralization edges that are recursively added.
    Therefore, BN2SPN builds an SPN following the hierarchy in the moral closure of ${\cal B}$ and subsequently SPN2BN unwinds the SPN following this same hierarchy.
    Thus, given ${\cal B}$, BN2SPN2BN yields the moral closure ${\cal B}^c$ of ${\cal B}$.
\end{proof}

Theorem \ref{th:rec_moral} has a couple of important consequences.
As Theorem \ref{th:rec_moral} establishes that the output of BN2SPN2BN is the moral closure ${\cal B}^c$ of the input BN ${\cal B}$, it immediately follows that the output BN is exactly the input BN whenever no directed moralization edges are added to ${\cal B}$.
One situation where this occurs is when ${\cal B}$ does not have any v-structures, such as in the case of HMMs.
Here, ${\cal B}^c = {\cal B}$, so the output BN of BN2SPN2BN is the same as the input (up to a relabelling of variables).
For example, recall the HMM in Figure \ref{fig:comp_hmm} (\subref{subfig:bn_hmm}).
BN2SPN2BN yielded back the same BN as illustrated in Figure \ref{fig:decomp_hmm} (\subref{subfig:us_hmm}).
A second important case is when the input is ${\cal B}^c$ itself.
This leads to out next result showing that our compilation-decompilation process is idempotent.

\begin{theorem}
    BN2SPN2BN is idempotent.
    \label{th:idem}
\end{theorem}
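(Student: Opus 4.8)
The plan is to reduce idempotence of BN2SPN2BN to idempotence of the moral closure operation, and then obtain the latter directly from the fixed-point nature of Definition \ref{def:moral}. By Theorem \ref{th:rec_moral}, a single application of BN2SPN2BN to ${\cal B}$ under the fixed order $\prec$ returns the moral closure ${\cal B}^c$. Hence establishing idempotence amounts to showing that a second application leaves ${\cal B}^c$ unchanged, i.e.~$\text{BN2SPN2BN}({\cal B}^c) = {\cal B}^c$.

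First I would verify that $\prec$ remains a valid topological order of ${\cal B}^c$, so that Theorem \ref{th:rec_moral} can be reapplied with the same order. Every directed moralization edge added while constructing ${\cal B}^c$ has the form $(V_i, V_j)$ with $V_i \prec V_j$, by the definition of a directed moralization edge. Thus all edges of ${\cal B}^c$ are oriented consistently with $\prec$, so $\prec$ still orders ${\cal B}^c$ topologically. This licenses a second invocation of Theorem \ref{th:rec_moral}, now on ${\cal B}^c$ with the order $\prec$, yielding $\text{BN2SPN2BN}({\cal B}^c) = ({\cal B}^c)^c$.

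It then remains to argue $({\cal B}^c)^c = {\cal B}^c$, i.e.~that the moral closure operator is itself idempotent. By Definition \ref{def:moral}, ${\cal B}^c$ is the fixed point of iteratively adding all directed moralization edges: the iteration halts precisely when no non-adjacent co-parent pair remains. Consequently, in ${\cal B}^c$ every pair $V_i, V_j$ sharing a common child is already adjacent, so no further directed moralization edge can be introduced. Running the moral closure construction on ${\cal B}^c$ therefore terminates immediately with no edges added, giving $({\cal B}^c)^c = {\cal B}^c$. Chaining the two applications of Theorem \ref{th:rec_moral} yields $\text{BN2SPN2BN}({\cal B}^c) = {\cal B}^c = \text{BN2SPN2BN}({\cal B})$, which is exactly idempotence.

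The main obstacle I anticipate is the bookkeeping around newly created co-parent relationships: adding an edge $(V_i, V_j)$ makes $V_i$ a fresh parent of $V_j$, which can in turn make $V_i$ a co-parent of $V_j$'s other parents and thereby demand still further edges. This cascade is precisely why Definition \ref{def:moral} is phrased as an iterative closure rather than a single moralization pass. The cleanest way to dispatch the concern is to lean on the fixed-point formulation itself: because ${\cal B}^c$ is \emph{defined} as the terminal graph of that iteration, the closure property ``every co-parent pair is adjacent'' holds by construction, so closing ${\cal B}^c$ a second time requires no separate cascading argument and adds nothing.
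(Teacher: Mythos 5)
Your proposal is correct and follows essentially the same route as the paper: apply Theorem \ref{th:rec_moral} to get ${\cal B}^c$, observe that the moral closure of ${\cal B}^c$ is ${\cal B}^c$ itself (a fixed point of the closure operation), and apply Theorem \ref{th:rec_moral} once more to conclude $\text{BN2SPN2BN}({\cal B}^c) = {\cal B}^c$. Your version is somewhat more careful than the paper's, which leaves implicit both the check that $\prec$ remains a valid topological order of ${\cal B}^c$ and the cascading-edges argument behind the fixed-point claim; these are worthwhile details, not a different proof.
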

\begin{proof}
    Let ${\cal B}$ be a BN and $\prec$ a fixed topological ordering of ${\cal B}$.
    Then ${\cal B}^c = BN2SPN2BN({\cal B})$, by Theorem \ref{th:rec_moral}.
    By definition, the moral closure of ${\cal B}^c$ is ${\cal B}^c$ itself.
    Thus, ${\cal B}^c = BN2SPN2BN({\cal B}^c)$.
    Therefore, BN2SPN2BN is idempotent.
\end{proof}

Theorem \ref{th:idem} has practical significance because it limits the maximum size of the decompiled BN to be the size of the moral closure of the input BN.
In contrast, if we change the decompilation method to \cite{zhao2015relationship} or \cite{Peharz:2016wl}, then applying BN2SPN2BN repeatedly will continually yield a larger BN.


\section{Conclusion}
\label{sec:conclusion}


In this paper, we formalize SPN decompilation by suggesting SPN2BN, an algorithm that converts an SPN into a BN.
SPN2BN is an improvement over \cite{zhao2015relationship} and \cite{Peharz:2016wl}, which are excessive with the number of introduced latent variables.
One key result of our SPN decompilation is that it constructs the moral closure ${\cal B}^c$ of the original BN ${\cal B}$.
This means that in certain cases like for HMMs, where the moral closure of a BN ${\cal B}$ is ${\cal B}$ itself, our SPN decompilation will return the original BN.
Moreover, our compilation-decompilation process is idempotent.
This has practical significance as it limits the maximum size of the decompiled BN to be the size of ${\cal B}^c$.

\clearpage

\end{document}